\begin{document}
\pagestyle{headings}
\mainmatter
\def\ECCV16SubNumber{545}  

\title{ALMN: Deep Embedding Learning with Geometrical Virtual Point Generating} 

\titlerunning{ALMN: Deep Embedding Learning with Geometrical Virtual Point Generating}

\authorrunning{Binghui Chen, Weihong Deng}

\author{Binghui Chen, Weihong Deng}
\institute{Beijing University of Posts and Telecommunications
\email{ \{chenbinghui,whdeng\}@bupt.edu.cn}}

\maketitle

\begin{abstract}
Deep embedding learning becomes more attractive for discriminative feature learning, but many methods still require hard-class mining, which is computationally complex and performance-sensitive. To this end, we propose Adaptive Large Margin N-Pair loss (ALMN) to address the aforementioned issues. Instead of exploring hard example-mining strategy, we introduce the concept of large margin constraint. This constraint aims at encouraging local-adaptive large angular decision margin among dissimilar samples in multimodal feature space so as to significantly encourage intraclass compactness and interclass separability. And it is mainly achieved by a simple yet novel geometrical Virtual Point Generating (VPG) method, which converts artificially setting a fixed margin into automatically generating a boundary training sample in feature space and is an open question. We demonstrate the effectiveness of our method on several popular datasets for image retrieval and clustering tasks.
\end{abstract}
\vspace{-2.5em}
\section{Introduction}
\vspace{-0.5em}
With the progress of deep learning \cite{krizhevsky2012imagenet,simonyan2014very,Szegedy2014Going}, deep embedding learning has received a lot of attention and has been applied in a wide range of tasks and applications, including image retrieval and clustering \cite{Arandjelovic2016NetVLAD,oh2016deep,Hershey2015Deep,hoffer2015deep}, pattern verification \cite{sun2014deep,Schroff2015FaceNet,Parkhi2015Deep,Yi2014Deep} and domain adaptation \cite{Tahmoresnezhad2016Visual,Long2014Transfer}. Deep embedding learning intends to learn a feature representation of the input image that preserves the distance between similar data points small and dissimilar data points large in the feature space.

In deep embedding learning community, most remarkable works are based on contrastive loss \cite{sun2014deep,Lin2015DeepHash,Simo2015Discriminative,Yi2014Deep} and triplet loss \cite{Schroff2015FaceNet,oh2016deep,hoffer2015deep,Parkhi2015Deep}. And it is a common knowledge that hard example mining is crucial to ensure the quality and efficiency of these above methods, since the overly easy examples can satisfy the constraint well and then produce nearly zero loss, without contributing to the parameter update during back-propagation. Nevertheless, many hard example mining methods require much computational cost when measuring the embedding vectors in feature space, and they are performance-sensitive, e.g. the hard-class mining procedure in N-pair loss \cite{Sohn2016npair}. 
\begin{figure}[t]
  \centering
  \includegraphics[width=0.9\linewidth]{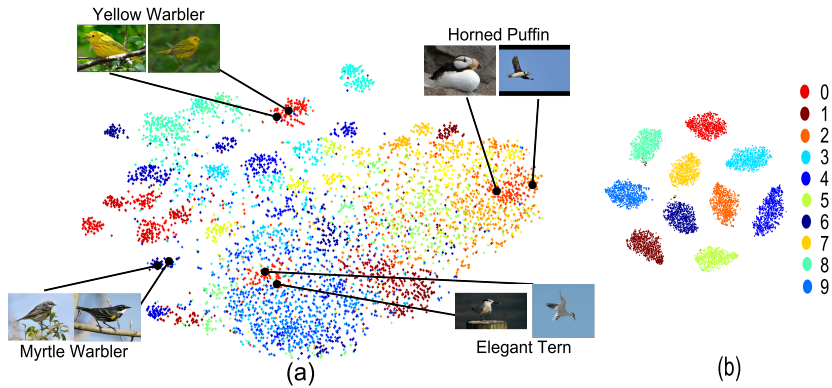}\\
  \vspace{-1em}
  \caption{Visualization (by t-SNE \cite{Laurens2015Accelerating}) of the deep embedding on the test splits of (a) CUB-200-2011 \cite{Wah2011The} (5924 images from class 101 to 200) and (b) MNIST \cite{Lecun2010The}. In (a), the
  intra-class distance can be larger than the inter-class distance, and the distribution is heterogeneous and multimodal. While in (b), the distribution is 'uniform' and ideal.}\label{fig1}
  \vspace{-1em}
\end{figure}

To alleviate the issue above and, to learn compact intra-class distance and separable inter-class distance, we introduce the concept of large margin constraint into N-pair loss instead of hard-class-mining. Some existing works \cite{Weinberger2006Distance,Liu2016Large} have focused on the learning of discriminative embedding via injecting large margin constraints into KNN and softmax, respectively. However, they exert non-adaptive constraint on the objective loss by introducing a fixed margin which is not suitable for the heterogeneous and multimodal feature distribution.

Figure \ref{fig1} illustrates the comparison between feature distribution on fine-grained bird dataset \cite{Wah2011The} and MNIST dataset \cite{Lecun2010The}. It is obviously observed that the diversity of embedding representation on bird dataset is prominent, where the intra-class distance can be larger than the inter-class distance and the distribution is heterogeneous, different from the 'uniform' distribution in MNIST. And in real cases, the distribution of feature space is complex due to pose and appearance \cite{Huang2016Local}. Thus, a consequent problem is that 
stronger margin constraint can be used for easy patterns while it is infeasible to hard patterns\footnote{Easy/hard patterns refer to where the intra-class distance is smaller/larger than the inter-class distance.}. And that is why coarsely imposing fixed constraint can not only be hard to improve the performances, but probably lead to the failure of training. Thus, introducing a prudent and local-adaptive margin constraint is of the essence.

In this paper, we propose Adaptive Large Margin N-pair loss (ALMN) to address the aforementioned issues, producing discriminative embedding under heterogeneous feature distribution in multimodal cases. It is mainly achieved by introducing an adaptive margin constraint in terms of local embedding representation structure. And as an extension to N-pair loss \cite{Sohn2016npair}, our method optimize the angular distance between samples as well. which is rotation-invariant and scale-invariant by nature. Furthermore, the adaptive large margin constraint is tactfully constructed by a novel technique of \textbf{\emph{Virtual Point Generating}} (VPG), factitiously mapping a well learned positive data point to a far place. Then, by optimizing this virtually generated new point well, a large angular margin can be obtained. Moreover, the strength of margin constraint induced by VPG for individual pattern is adjustable, quantified by hyper-parameter $\beta$. With bigger $\beta$, the ideal margin between samples becomes larger. Our ALMN is a flexible learning objective, and can be easily used as a drop-in loss function in the end-to-end frameworks and combined with any other hard example mining strategies. To our best knowledge, it is the first work to introduce margin constraint by generating virtual data point for deep embedding learning, virtual point generating is also an open question, in this work, we simply consider a geometrical way. Image retrieval and clustering experiments have been performed on several datasets, including CUB-200-2011 \cite{Wah2011The}, CARS196 \cite{Krause20133D}, Flowers102 \cite{Nilsback08}, Aircraft \cite{Maji2013Fine} and Stanford Online Products \cite{oh2016deep}.
\vspace{-1em}
\section{Related Work}
\vspace{-1em}

The very key goal of deep embedding learning is to learn a feature representation that keeps the distance between related data points small and unrelated data points large on the feature space. Some research works jointly optimize contrastive loss and softmax loss for the purpose of discriminative feature learning, such as DeepID2\cite{sun2014deep} and DeepID2+\cite{Sun2014Deeply}. Facenet \cite{Schroff2015FaceNet} proposes triplet loss to improve the ability of deep embedding learning without jointly training with softmax loss. And many remarkable works use triplet-based objective loss to optimize deep frameworks in many tasks \cite{Parkhi2015Deep,qian2015fine,hoffer2015deep,oh2016deep}. Lifted structure embedding \cite{oh2016deep} encourages that each positive pair compares the distances against all the negative pairs in one mini-batch, aiming to make full use of the mini-batch. To avoid the convergence at bad local optimum, it optimizes a smooth upper bound function of nested \emph{max} functions. Local Similarity-Aware \cite{Huang2016Local} generalizes triplet loss to a quadruplet-like loss and selects hard samples by PDDM units. N-pair loss \cite{Sohn2016npair} expands the idea of triplet or quadruplet tuple to N-pair tuple, and enforces softmax cross-entropy loss among the pairwise similarity values in the batch. We share the similar core with N-pair that takes all negative samples in the current mini-batch into consideration, but as an extension, our ALMN can lead more discriminative embedding even without hard-sample mining, as a consequence of adaptive large margin learning.

The performances of most aforementioned research works are sensitive to the selected example pairs. Selecting genius hard samples to construct a training batch can significantly improve the quality of learning, but it also incurs much computational cost. However, our ALMN does not require hard-class mining (adopted in original N-pair loss), and thus allows the training of discriminative embedding with a lower computational cost.

There are some other works aiming at learning discriminative embedding feature. Large Margin Nearest Neighbor (LMNN) \cite{Weinberger2006Distance} optimizes the Mahalanobis metric for nearest neighbor classification. Recently, Large Margin Softmax (L-Softmax) \cite{Liu2016Large} encourages the angular decision margin between classes. While, it is designed for Softmax and the margin constraints are the same for any patterns, e.g. double-angle constraint for both easy and hard patterns, thus maybe unsuitable for multimodal feature space, and the convergence of model is slow. Our ALMN allows local-adaptive margin constraint and can be successfully applied in multimodal cases.

And some other works emerge in the deep embedding learning community. Clustering\cite{songCVPR17} formulates the NMI as the objective function and optimizes it in deep models. HDC\cite{Yuan_2017_ICCV} employs the cascaded models and selects hard-samples from different levels and models. Smart-mining\cite{kumar2017smart} combines local triplet loss and global loss to optimize the deep metric with hard-samples mining. Sampling-Matters\cite{Wu_2017_ICCV} proposes distance weighted sampling strategy and use a much stronger deep model(Res-50) than most existing methods. Angular loss\cite{wang2017deep} optimize a triangle-based angular function. BIER loss\cite{Opitz_2017_ICCV} adopts ensemble learning framework of online gradients boosting which is totally different from our method that belongs to single feature learning family. Proxy-NCA\cite{Movshovitz-Attias_2017_ICCV} explains why popular classification loss works from proxy-agent view, and the implementation is very similar with Softmax. In summary, different from the above methods that investigate ways of informative samples mining or feature ensemble, we mainly focus on introducing an open question, i.e. VPG, to impose large margin constraint so as to improve the discrimination of deep embedding leaning.
\vspace{-1em}
\section{Adaptive Large Margin N-pair Loss}
\vspace{-1em}
In deep embedding learning, our goal is to learn a deep feature embedding $f(X)$ from input image $X$ into a feature vector $x\in\mathbb{R}^{d}$, such that the similarity $S(x_{i},x_{j})$ between $x_{i}$ and $x_{j}$ is higher when they belong to the same class and is lower when they belong to different classes, where $x_{*}$ refers to the feature vector of image $X_{*}$. To ensure the intra-class compactness and the inter-class separability, we introduce large margin constraint instead of exploring sample-mining strategy. One related work L-Softmax \cite{Liu2016Large} uses a preset and fixed angular margin constraint to enlarge the margin between classes. While in practical vision tasks, the embedding distribution always exhibits a character of multimode due to pose and appearance \cite{Huang2016Local}, therefore, a fixed margin constraint is not suitable. Specifically, a relatively weaker constraint will contribute little to the optimization of easy patterns, while a rigorous constraint might be too strong to guide the training of hard patterns. Under multimodal situation, the learning of discriminative feature embedding by injecting an applicable margin constraint could suit the remedy to the case. Therefore, we propose Adaptive Large Margin N-pair loss (ALMN) that can meet the needs of multimodal feature distribution. Below, we first give a review of N-pair loss, then introduce our basic objective function, and finally show the mainstay of ALMN, i.e. \emph{Virtual Point Generating}.
\vspace{-1em}
\subsection{Review of N-pair Loss and Preliminaries}
N-pair loss \cite{Sohn2016npair} points out that simultaneously optimizing with multiple negative samples can be regarded as an approximation of 'global optimization' and thus can improve the performances. It is formulated as follows:
\vspace{-0.7em}
\begin{equation}\label{eq5}
  L=-\frac{1}{N}\sum_{i}\log{\frac{e^{x_{i}^{T}x_{i^{+}}}}{e^{x_{i}^{T}x_{i^{+}}}+\sum_{y_{j}\neq{y_{i^{+}}}}e^{x_{j}^{T}x_{i^{+}}}}}+\frac{\lambda}{2N}\sum_{i=1}^{N}\|x_{i}\|_{2}^{2}
\end{equation}
where $\lambda$ is a regularization constant for $L_{2}$ norm and $N$ is the mini-batch size. $x_{i}, x_{i^{+}}, x_{j}$ refer to the positive point, anchor point and negative points respectively. Moreover, when minimizing Eq. \ref{eq5}, the optimization of inner-product-based softmax-like function is implicit to optimize the angle between samples, since the similarity based on inner product can be rewritten into $S(x_{i},x_{j})=x_{i}^{T}x_{j}=\|x_{i}\|\|x_{j}\|\cos(\theta)$, and in order to correctly separate $x_{i}$ from $x_{j}$, N-pair loss is to force $x_{i}^{T}x_{i^{+}}>x_{j}^{T}x_{i^{+}},~\forall{y_{j}\neq{y_{i}}}$, i.e. $\|x_{i}\|\cos\theta_{i}>\|x_{j}\|\cos\theta_{j}$, where $\theta_{i}/\theta_{j}$ is the angle between $x_{i}/x_{j}$ and $x_{i^{+}}$, and this optimization is mainly determined by $\cos(\theta_{.})$, verified by L-Softmax \cite{Liu2016Large}.
\vspace{-0.7em}
\subsection{Basic Objective Function based on Centers}
\vspace{-0.6em}
From minimizing Eq. \ref{eq5}, it can be observed that we would like to force $x_{i}^{T}x_{i^{+}}>x_{j}^{T}x_{i^{+}},~\forall{y_{j}\neq{y_{i}}}$ (i.e. $S(x_{i},x_{i^{+}})>S(x_{j},x_{i^{+}})$) in order to correctly separate $x_{i}$ from $x_{j}$, in another word we intend to push $x_{i}$ close to $x_{i^{+}}$ and pull $x_{j}$ far from $x_{i^{+}}$. 
Apparently, the reasonability of location of the anchor point $x_{i^{+}}$ determines the stability of model training, since anchor point $x_{i^{+}}$ affects the gradients direction, and unstable direction will impede the stability of model training. To this end, we adopt class center $c_{y_{i}}$ instead of random positive sample as our anchor point $x_{i^{+}}$. While,  it is impossible to update the class centers with respect to the entire training set during each iteration. We share a similar idea with \cite{wen2016discriminative} that performs the update on the basis of mini-batch. At each iteration, the class centers are updated as follows:
\vspace{-1em}
\begin{equation}\label{eq6}
  c_{z}^{t+1}=c_{z}^{t}-\alpha\frac{\sum_{i=1}^{N}\textbf{1}\{y_{i}=z\}\cdot(c_{z}^{t}-x_{i})}{1+\sum_{i=1}^{N}\textbf{1}\{y_{i}=z\}}
\vspace{-0.5em}
\end{equation}
where $\textbf{1}\{\emph{condition}\}=1$ if the $\emph{condition}$ is satisfied, and $\textbf{1}\{\emph{condition}\}=0$ if not. $\alpha$ is the learning rate. Finally, our basic objective loss is as follows:
\vspace{-0.7em}
\begin{small}
\begin{gather}\label{eq13}
L=-\frac{1}{N}\sum_{i}\log\frac{e^{x_{i}^{T}c_{y_{i}}}}{e^{x_{i}^{T}c_{y_{i}}}+\sum_{y_{j}\neq{y_{i}}}e^{x_{j}^{T}c_{y_{i}}}}+\frac{\lambda}{2N}\sum_{i=1}^{N}\|x_{i}\|_{2}^{2}
\end{gather}
\end{small}
\vspace{-2.5em}
\subsection{Virtual Point Generating}\label{sec3_1}
\vspace{-0.7em}
However, without hard-class mining, the constraint $x_{i}^{T}c_{y_{i}}>x_{j}^{T}c_{y_{i}},~\forall{y_{j}\neq{y_{i}}}$ \footnote{For simplicity, here, we consider the problem of binary class, where label $y\in [1,2]$. Multi-classification complicates our analysis but has the same mechanism as binary scenario.} can hardly satisfy our demands of discriminative embedding learning, since it can be easily satisfied and hence stop contributing to parameter update, as shown in Fig.\ref{fig3}.(a) where the decision boundaries for two classes are overlapped, yielding separable but not discriminative features. Inspired by L-Softmax \cite{Liu2016Large}, optimizing a rigorous objective is to produce more rigorous decision boundaries and larger decision margin, we propose \emph{Virtual Point Generating} (VPG) to enhance the constraint by generating virtually local-hard point $x_{g}$, this constraint based on the generated point is more suitable in multimodal space than L-Softmax, producing an adaptive decision margin. Here, we will first introduce the general concept of our VPG and then will explain how to make it adaptive. Since the training of Eq. \ref{eq13} is based on angular optimization, $x_{g}$ is thus generated in the angular manner, and to keep the same amplitude as $x_{i}$, we formulate $x_{g}$ as follows:
\vspace{-0.5em}
\begin{equation}\label{eq2}
  x_{g}=\frac{\beta b+x_{i}}{\|\beta b+x_{i}\|}\|x_{i}\|
  \vspace{-0.5em}
\end{equation}
As shown in Fig.\ref{fig3}, vector $b$ has the same direction with $x_{i}-c_{y_{i}}$ and affects the location of $x_{g}$, here we do not focus on its specific value which will be investigated later. $\beta$ is a hyper-parameter to further control the location of $x_{g}$. From the right chart of Figure \ref{fig3} ($\beta=1$), it can be observed that the new generated data point $x_{g}$ has a larger angular distance to the anchor point $c_{y_{i}}$ than $x_{i}$. Therefore, to make a more rigorous decision boundary, we instead require
\vspace{-0.5em}
\begin{equation}\label{eq1}
 x_{g}^{T}c_{y_{i}}>x_{j}^{T}c_{y_{i}},~\forall{y_{j}\neq{y_{i}}}
 \vspace{-0.5em}
\end{equation}
Due to the geometrical relationship in Figure \ref{fig3}, $x_{i}^{T}c_{y_{i}}>x_{g}^{T}c_{y_{i}}$ always hold, if we can optimize $x_{g}^{T}c_{y_{i}}>x_{j}^{T}c_{y_{i}}$, then $x_{i}^{T}c_{y_{i}}>x_{j}^{T}c_{y_{i}}$ will spontaneously hold. So the new objective (i.e. Eq.\ref{eq1}) is a stronger constraint (requirement) to correctly separate $x_{i}$ from $x_{j}$, producing more rigorous decision boundaries.
\begin{figure}[t]
  \centering
  \includegraphics[width=0.82\linewidth]{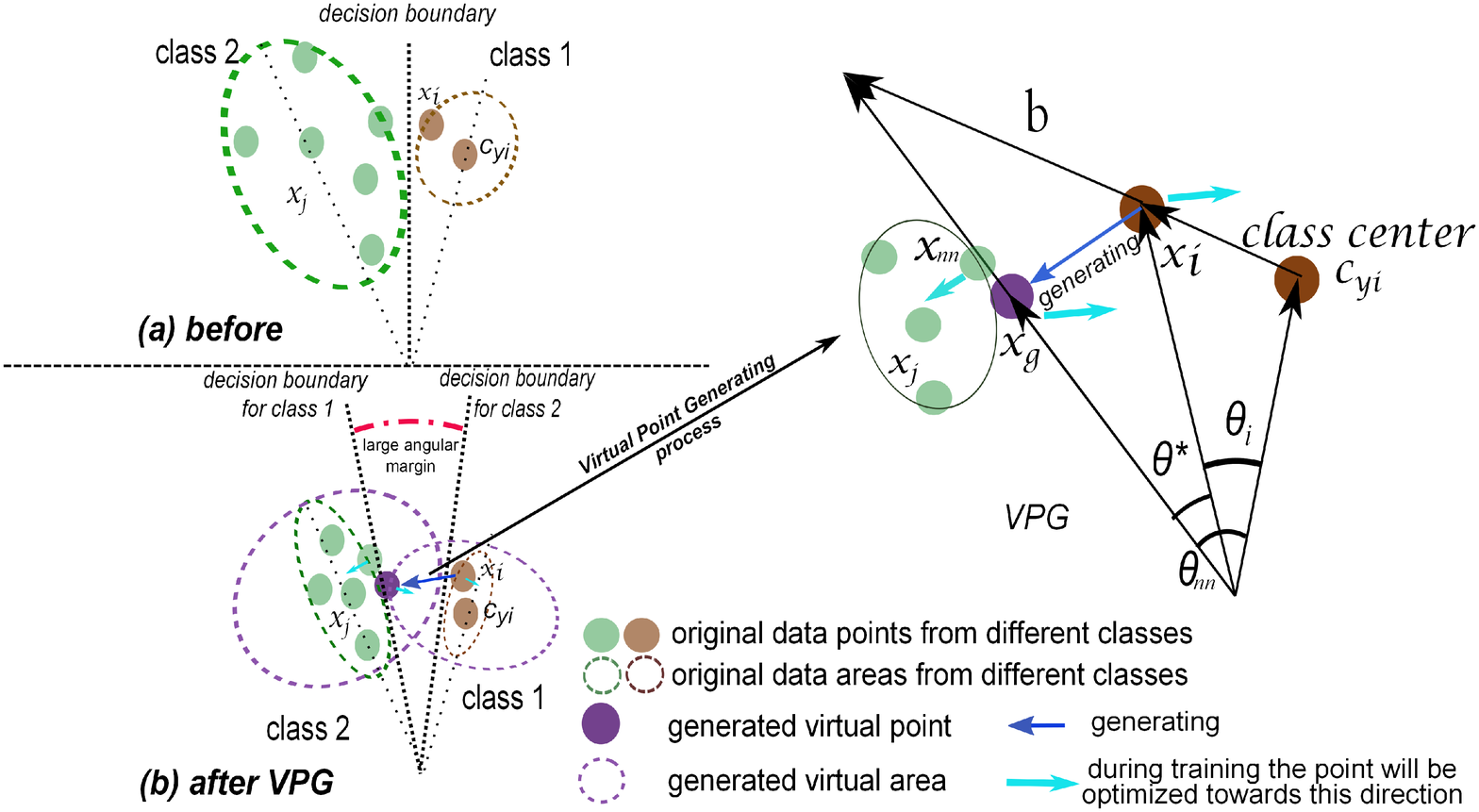}\\
  \vspace{-1em}
  \caption{Geometric interpretation of VPG ($\beta=1$). The embedding features learned before and after VPG are shown in left chart, one can observe that the angular margin between brown and green classes is enlarged by VPG, since the generated purple point is the boundary example and optimizing it will benefit the discriminative feature learning. The generating process is shown in right chart.
  }\label{fig3}
  \vspace{-2em}
\end{figure}

As illustrated in the left chart of Figure \ref{fig3}, optimizing the objective $x_{g}^{T}c_{y_{i}}>x_{j}^{T}c_{y_{i}}$, which is implicitly with a stronger margin constraint, is to produce a large angular decision margin between classes, and to encourage both intra-class compactness and inter-class separability. Specifically, as in Fig.\ref{fig3}.(a) before VPG, when the training loss get to a stable level, the data points in feature space have no need to move further because they have satisfied the constraint $x_{i}^{T}c_{y_{i}}>x_{j}^{T}c_{y_{i}}$ well, however after VPG as shown in Fig.\ref{fig3}.(b), $x_{i}$ is mapped to a boundary point or even much harder point in feature space, i.e. $x_{g}$, so as to correctly separate $x_{g}$ from $x_{j}$, the new decision boundary is produced, and it will further push $x_{g}$ as well as $x_{i}$ towards $c_{y_{i}}$ and $x_{j}$ far from $c_{y_{i}}$ in angular manner, yielding more compact intra-class and separable inter-class angular distributions. Moreover, naturally inferred from Figure \ref{fig3}, by increasing $\beta$ to a larger value (e.g. $2,3,\ldots$), a farther $x_{g}$ is generated, in another word, a more rigorous objective is to be optimized and thus in ideal case a more discriminative embedding can be achieved. 

\textbf{Adaptive Margin}: Without loss of generality, we consider $\beta=1$. As mentioned above, our goal is to make an adaptive large margin constraint, and from Eq. \ref{eq2} one can observe that $x_{g}$ is mainly determined by the vector $b$. Hence, vector $b$ should be local-adaptive such that the margin constraint based on $x_{g}$ is applicable for each case, e.g. hard and easy patterns. Specifically, considering the local feature space, vector $b$ should satisfy  $\theta^{*}=\theta_{nn}-\theta_{i}$ (as in Figure \ref{fig3}), where $\theta_{nn}$ is the angle between $c_{y_{i}}$ and its nearest negative vector $x_{nn}$, and $\theta_{i}$ is the angle between $c_{y_{i}}$ and $x_{i}$. In summary, since $x_{g}$ is based on $x_{nn}$, with considering the local feature structure, the margin constraint introduced by $x_{g}$ is adaptive, in another word, easy patterns (larger $\theta_{nn}$ and smaller $\theta_{i}$, i.e. larger $\theta^{*}$) can be equipped with relatively stronger constraint, and hard patterns (smaller $\theta_{nn}$ and larger $\theta_{i}$, i.e. smaller $\theta^{*}$) with weaker constraint. As a consequence, the margin constraint is adaptive.

To generate $x_{g}$, we need to compute the specific value of $b$. However, it is not our focus and its specific value does not matter. Since, in practical application, we adopt random sampling instead of hard negative sample mining and only one mini-batch is fed into the network each iteration, so in one mini-batch $x_{nn}$ is not globally optimal and is always much farther, resulting in a bigger $\theta_{nn}$ (bigger $\theta^{*}$), i.e. bigger $\|b\|$, in another word a farther $x_{g}$ and non-local margin constraint are introduced. As a consequence, the training will be hard and even get failure. We address this challenge by empirically and experimentally constructing a lower bound vector \footnote{The lower bound vector has the same direction with the original vector, yet smaller amplitude.} of $b$, i.e. $b_{L}$, as follows:
\vspace{-0.1em}
\begin{equation}\label{eq3}
b_{L}=\frac{x_{i}-c_{y_{i}}}{\|x_{i}-c_{y_{i}}\|}\|x_{i}\|\sqrt{2-2\cos{(\theta_{nn}-\theta_{i})}}
\vspace{-1.5em}
\end{equation}
\begin{proposition}
$b_{L}$ is a lower bound vector of $b$ as illustrated in Figure \ref{fig4}.
\end{proposition}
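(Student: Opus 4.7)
The plan is to interpret the claim as a purely geometric inequality in the triangle spanned by $O$ (the origin), $x_i$, and $x_g$, and reduce ``$b_L$ is a lower bound of $b$'' to comparing the length of a chord with the length of a segment cut off by two specific lines. Note that $b$ and $b_L$ are by definition both along the unit vector $u = (x_i - c_{y_i})/\|x_i - c_{y_i}\|$, so it suffices to show $\|b\| \ge \|b_L\|$. First I would rewrite $\|b_L\| = \|x_i\|\sqrt{2 - 2\cos(\theta_{nn}-\theta_i)} = 2\|x_i\|\sin\!\left(\frac{\theta_{nn}-\theta_i}{2}\right)$ and observe that this is precisely the Euclidean chord length $|x_i x_g|$, because $x_i$ and $x_g$ both lie on the sphere of radius $\|x_i\|$ centered at $O$ with angular separation $\theta_{nn}-\theta_i$.

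Next, I would unpack the definition $x_g = (x_i + b)/\|x_i + b\|\cdot\|x_i\|$ (Eq.~\ref{eq2} with $\beta=1$) and denote $P := x_i + b$. By construction, $P$ lies on the ray $R$ emanating from $O$ through $x_g$. Since $b$ is parallel to $u$, $P$ also lies on the line through $c_{y_i}$ and $x_i$, extended beyond $x_i$. Hence $P$ is the unique intersection of that line with $R$, and $\|b\| = |x_i P|$. The goal reduces to proving $|x_i P| \ge |x_i x_g|$.

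The key geometric step is to show $|OP| \ge |Ox_g| = \|x_i\|$, i.e.\ $P$ lies on $R$ at least as far from the origin as $x_g$. Once this is established, consider the triangle $x_i x_g P$. The isoceles triangle $O x_i x_g$ has base angle $\tfrac{1}{2}(\pi - (\theta_{nn}-\theta_i))$ at $x_g$ (between $\vec{x_g O}$ and $\vec{x_g x_i}$); since $\vec{x_g P}$ points along $+x_g$, it is opposite to $\vec{x_g O}$, giving
\begin{equation*}
\angle x_i x_g P \;=\; \pi - \tfrac{1}{2}\bigl(\pi - (\theta_{nn}-\theta_i)\bigr) \;=\; \tfrac{\pi}{2} + \tfrac{\theta_{nn}-\theta_i}{2} \;\ge\; \tfrac{\pi}{2}.
\end{equation*}
In any triangle, the side opposite an obtuse angle is the longest, so $|x_i P| \ge |x_i x_g| = \|b_L\|$, which is exactly $\|b\| \ge \|b_L\|$.

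The main obstacle is verifying $|OP| \ge \|x_i\|$, since this is where the geometry depicted in Figure~\ref{fig4} is really being used. One can reduce it to $b \cdot x_i \ge 0$ via the identity $|OP|^2 = \|x_i\|^2 + 2\,b\!\cdot\! x_i + \|b\|^2$, and then check that $b \cdot x_i = \|b\|\,\|x_i\|(\|x_i\| - \|c_{y_i}\|\cos\theta_i)/\|x_i-c_{y_i}\|$, so the required condition becomes $\|c_{y_i}\|\cos\theta_i \le \|x_i\|$. This holds under the mild assumption that the class center and its in-class feature have comparable norms---precisely the regime encouraged by the $L_2$ regularizer in Eq.~\ref{eq13} and depicted in Figure~\ref{fig4}---so this step is benign in practice and is implicit in the figure-based illustration the proposition appeals to.
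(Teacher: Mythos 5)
Your proof is correct and follows essentially the same route as the paper's: both identify $\|b_L\|$ with the chord length $\|x_g-x_i\|$ via the Cosine Law in the isosceles triangle $\triangle{O x_i x_g}$, and both deduce $\|b\|\ge\|b_L\|$ from the obtuseness of the angle at $x_g$ in the triangle formed by $x_i$, $x_g$, and the tip of $b$ (the paper phrases this via the Sine Law, you via ``the side opposite an obtuse angle is longest,'' which is the same fact). If anything you are more careful than the paper, which dismisses the obtuseness claim as ``easy to prove'': your argument makes explicit that it rests on $\|x_i+b\|\ge\|x_i\|$, i.e.\ on the condition $\|c_{y_i}\|\cos\theta_i\le\|x_i\|$, which the paper only assumes implicitly through Figure \ref{fig4}.
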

\begin{proof}
We provide a explicit geometric interpretation of this lower bound vector $b_{L}$. As shown in Figure \ref{fig4}.(a), since $\|x_{g}\|=\|x_{i}\|$, and according to the Cosine Law, in $\triangle{ox_{g}x_{i}}$, $\|x_{g}-x_{i}\|=\sqrt{\|x_{g}\|^{2}+\|x_{i}\|^{2}-2\|x_{g}\|\|x_{i}\|\cos{\theta^{*}}}=\|x_{i}\|\sqrt{2-2\cos{(\theta_{nn}-\theta_{i})}}$, Additionally, $x_{i}$ and $x_{g}$ are on one concentric circle and easy to prove $\theta_{2}<\theta_{4}<\frac{\pi}{2}<\theta_{3}$, according to the Sine Law, in $\triangle{x_{g}bx_{i}}$, $\frac{\|x_{g}-x_{i}\|}{\|b\|}=\frac{\sin{\theta_{2}}}{\sin{\theta_{3}}}<1$. So $\|x_{g}-x_{i}\|<\|b\|$ always holds and from Eq. \ref{eq3} we have $\|b_{L}\|=\|x_{g}-x_{i}\|$, thus $\|b_{L}\|<\|b\|$ and, vector $b_{L}$ and $b$ have the same direction with $x_{i}-c_{y_{i}}$. In conclusion, $b_{L}$ in Eq. \ref{eq3} can be regarded as a lower bound vector of $b$.
\end{proof}
\begin{figure}[h]
\vspace{-1.5em}
  \centering
  \includegraphics[width=0.8\linewidth]{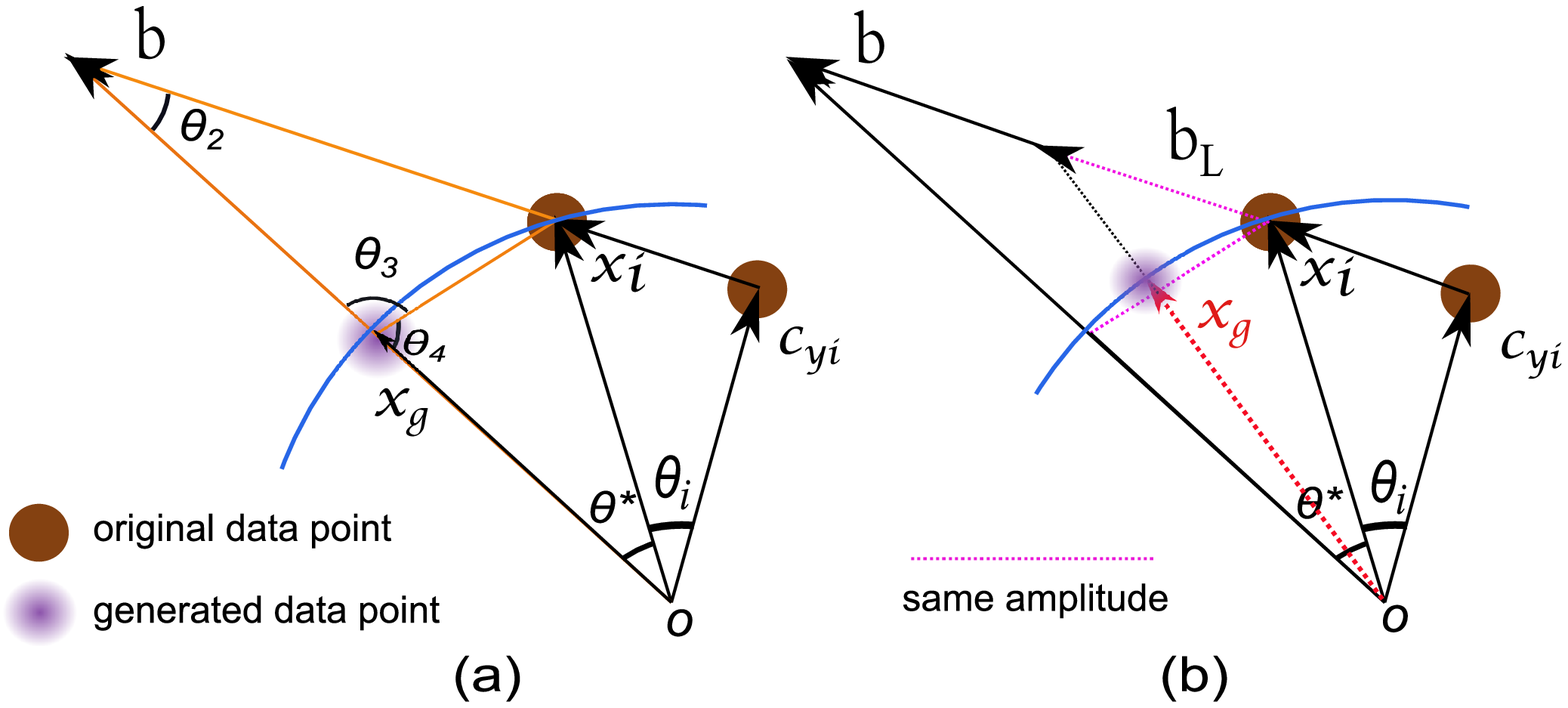}\\
  \vspace{-1em}
  \caption{(a) gives the geometric proof. (b) shows the stable $x_{g}$ generated by $b_{L}$ ($\beta=1$).}\label{fig4}
  \vspace{-2em}
\end{figure}

Replacing $b$ in Eq. \ref{eq2} with the lower bound vector $b_{L}$, we can obtain a more stable $x_{g}$ as depicted in Figure \ref{fig4}.(b) and formulate it as follows:\vspace{-0.5em}
\begin{equation}\label{eq11}
\vspace{-0.7em}
  x_{g}=\frac{\beta b_{L}+x_{i}}{\|\beta b_{L}+x_{i}\|}\|x_{i}\|=\frac{(M+1)x_{i}-Mc_{y_{i}}}{\|(M+1)x_{i}-Mc_{y_{i}}\|}\|x_{i}\|
\end{equation}
where $M=\frac{\beta\|x_{i}\|\sqrt{2-2\cos{(\theta_{nn}-\theta_{i})}}}{\|x_{i}-c_{y_{i}}\|}$, it addresses the problem of less-than-ideal angular constraint to some extent, which is caused by random sample mining and mini-batch training. We experimentally find that it indeed works well and also allows the stability of network optimizing.

\textbf{Overall Objective}: to optimize the new rigorous objective $x_{g}^{T}c_{y_{i}}>x_{j}^{T}c_{y_{i}}$, we follow N-pair loss and formulate it as the following one, i.e. our ALMN loss:
\begin{small}
\begin{gather}\label{eq7}
  L=-\frac{1}{N}\sum_{i}\log\frac{e^{x_{g}^{T}c_{y_{i}}}}{e^{x_{g}^{T}c_{y_{i}}}+\sum_{y_{j}\neq{y_{i}}}e^{x_{j}^{T}c_{y_{i}}}}+\frac{\lambda}{2N}\sum_{i=1}^{N}\|x_{i}\|_{2}^{2}
\end{gather}
\end{small}
where $x_{g}$ is shown in Eq. \ref{eq11}. Obviously when $\beta=0, x_{g}=x_{i}$, and we make it as our baseline. The ALMN can be easily optimized by commonly used SGD and BP algorithm. The gradients with respect to $x_{i}$ and $x_{j}$ are listed as follows:
\begin{gather}\label{eq14}
\frac{\partial{L}}{\partial{x_{i}}}=\frac{1}{N}(\frac{e^{x_{g}^{T}c_{y_{i}}}}{e^{x_{g}^{T}c_{y_{i}}}+\sum_{y_{j}\neq{y_{i}}}e^{x_{j}^{T}c_{y_{i}}}}-1)\frac{\partial{(x_{g}^{T}c_{y_{i}})}}{\partial{x_{i}}}+\frac{\lambda}{N}x_{i}\\
\frac{\partial{L}}{\partial{x_{j}}}=\frac{1}{N}\frac{e^{x_{j}^{T}c_{y_{i}}}}{e^{x_{g}^{T}c_{y_{i}}}+\sum_{y_{j}\neq{y_{i}}}e^{x_{j}^{T}c_{y_{i}}}}c_{y_{i}}
\vspace{-1em}
\end{gather}
\vspace{-1em}
\begin{align}\label{eq15}
\frac{\partial{(x_{g}^{T}c_{y_{i}})}}{\partial{x_{i}}}=\frac{(M+1)x_{i}^{T}c_{y_{i}}-Mc_{y_{i}}^{T}c_{y_{i}}}{\|(M+1)x_{i}-Mc_{y_{i}}\|\|x_{i}\|}x_{i}+\nonumber
\end{align}
\vspace{-1em}
\begin{align}
\frac{(M+1+\frac{M(x_{i}-c_{y_{i}})^{T}c_{y_{i}}}{\|x_{i}-c_{y_{i}}\|^{2}})\|x_{i}\|c_{y_{i}}+M\frac{(\|x_{i}-c_{y_{i}}\|^{2}-\|x_{i}\|^{2})(x_{i}-c_{y_{i}})^{T}c_{y_{i}}}{\|x_{i}\|\|x_{i}-c_{y_{i}}\|^{2}}x_{i}}{\|(M+1)x_{i}-Mc_{y_{i}}\|}\nonumber
\end{align}
\vspace{-1em}
\begin{align}
-\frac{((M+1)x_{i}-Mc_{y_{i}})^{T}c_{y_{i}}(M(x_{i}-c_{y_{i}})^{T}x_{i}+\|x_{i}\|^{2})((M+1)x_{i}-Mc_{y_{i}})}{\|(M+1)x_{i}-Mc_{y_{i}}\|^{3}\|x_{i}\|}
\end{align}
\vspace{-1em}
\begin{algorithm}
\caption{Training deep model with our ALMN}\label{algorithm}
\begin{algorithmic}[1]
\REQUIRE  training set $\{x_{i},y_{i}\}_{i=1}^{N}$ ($N$ denotes the image number), pre-trained CNN model, hyper-parameter $\beta$.
\end{algorithmic}
\textbf{Training}
\begin{algorithmic}[1]
\FOR{$t:=1\ldots T$}
 \FOR{$i:=1\ldots N$}
 \STATE adopt $c_{y_{i}}$ as the anchor point, compute $\theta_{nn}$, $\theta_{i}$
 \STATE generate $x_{g}$ from $x_{i}$ with Eq.\ref{eq11}.
 \STATE compute $Loss$ with Eq.\ref{eq7}, compute gradients with Eq. \ref{eq14}-\ref{eq15}.
 \ENDFOR
   \STATE update the anchor point with Eq.\ref{eq6}.
\ENDFOR
\end{algorithmic}
\textbf{Output:} Well trained deep model.
\end{algorithm}
\vspace{-1em}

Finally, we show ALMN in Algorithm.\ref{algorithm}. Most worthy of mention is that we introduce the novel concept of VPG to enhance the margin constraint, i.e. generating a virtually boundary point and optimizing $x_{g}^{T}c_{y_{i}}>x_{j}^{T}c_{y_{i}}$ instead of the original $x_{i}^{T}c_{y_{i}}>x_{j}^{T}c_{y_{i}}$. While our VPG does not limit the specific formulation of $x_{g}$, we leave it as an open question and there can be other ways to generate $x_{g}$, here for geometrical interpretation, we simply take Eq.\ref{eq2} and \ref{eq11}.
\section{Discussion}
\vspace{-1em}
The ALMN loss encourages an adaptive large angular margin among classes by a novel constraint constructing method VPG. 
It has some nice properties:
\begin{itemize}
\vspace{-0.5em}
   \item The core of VPG is to enhance the margin constraint by generating virtually hard points. And the holistic margin constraint can be controlled by hyper-parameter $\beta$. With bigger $\beta$, the ideal margin between classes becomes larger, yielding more discriminative embedding.
   \item For any fixed $\beta$, the angular margin constraint induced by VPG is local-adaptive and varies across instances, since the virtual point is generated on the basis of local feature structure. Thus, easy patterns can be supervised by stronger constraint, and hard patterns will be optimized under the relatively weaker constraint.
   \item Our VPG is a generic method that can be easily combined with any other hard-sample-mining methods and model architectures.
\end{itemize}

\textbf{Comparison to N-pair loss}: as an extension to N-pair loss \cite{Sohn2016npair}, our ALMN has two advantages. First, by employing class centers $c_{y_{i}}$ as the anchor points instead of random positive points, the optimization of our ALMN is more stable and ideal than N-pair loss due to the correct direction of gradients, and thus the performance of deep embedding learning can be improved, verified by the results comparison between ALMN ($\beta=0$) and N-pair loss in Table. \ref{tab2} and \ref{tab3}. Second, and which is our most contribution, ALMN (e.g. $\beta=3$) can significantly encourage a large angular decision margin among classes, yielding more discriminative feature embedding than N-pair loss, and it is mainly achieved by the novel and generic VPG method. Furthermore, our ALMN does not require hard-class mining procedure which is adopted to construct the training batches in N-pair loss.

\textbf{Comparison to other constraint losses}: Noisy-Softmax \cite{Chen_2017_CVPR} imposes annealed noise on Softmax which aims to improve the generalization ability of DCNNs. Our ALMN has a similar goal with \cite{Liu2016Large,liu2017sphereface} that enhancing the discriminative property of learned features by exerting constraint on objective function. However, in \cite{Liu2016Large}, the constraint is specifically designed for Softmax layer, and the strength of margin constraint behind the optimization objective $\|w_{y_{i}}\|\|x_{i}\|\cos{m\theta_{y_{i}}}>\|w_{j}\|\|x_{i}\|\cos\theta_{j}$ are the same for each samples (e.g. m=2), and this fixed m-times-angle constraint is not applicable under heterogeneous feature distribution. In contrast, our ALMN is towards deep embedding learning, for a certain $\beta$, our margin constraint behind $x_{g}^{T}c_{y_{i}}>x_{j}^{T}c_{y_{i}}$ is local-adaptive, since the virtual point $x_{g}$ is generated on the basis of its neighbouring feature space not a fixed scale. And, the margin constraint of ALMN is introduced by \emph{generated virtual point} which is different from directly setting $m$ in \cite{Liu2016Large}.

\textbf{Ablation study}: to highlight the effectiveness of our local-adaptive large margin constraint, we conduct a contrast test by modifying our basic objective function (Eq. \ref{eq13}) to a L-Softmax-like loss, which is of the fixed angular margin constraint, as follows:
\begin{footnotesize}
\begin{equation}\label{eq12}
  L=-\frac{1}{N}\sum_{i}\log{\frac{e^{\|x_{i}\|\|c_{y_{i}}\|\psi{(\theta_{y_{i}})}}}{e^{\|x_{i}\|\|c_{y_{i}}\|\psi{(\theta_{y_{i}})}}+\sum_{y_{j}\neq{y_{i}}}e^{x_{j}^{T}c_{y_{i}}}}}+\frac{\lambda}{2N}\sum_{i=1}^{N}\|x_{i}\|_{2}^{2}
\end{equation}
\end{footnotesize}
where $\psi{(\theta_{y_{i}})}=(-1)^{k}\cos{(m\theta_{y_{i}})}-2k$ is the same as in L-softmax. Then, we train the same CNN model with Eq. \ref{eq12} ($m=2$) and Eq. \ref{eq7} ($\beta=2$), respectively. From Figure \ref{fig6}, we can observe that the training loss of L-Softmax($m=2$) stops reducing at a higher level, implying it does not converge, and we infer that the double-angle constraint may be much stronger for some examples (e.g. hard patterns) and this phenomenon will disturb the overall training process. While, the loss of our ALMN drops fast to a relatively low level, demonstrating that the local-adaptive angular margin constraint can be well optimized and thus is indeed crucial to address the problem of discriminative embedding learning in multimodal cases.

\begin{figure}[t]
  \vspace{-1em}
  \centering
  \begin{minipage}{0.45\linewidth}
  \centering
  \includegraphics[width=1\linewidth]{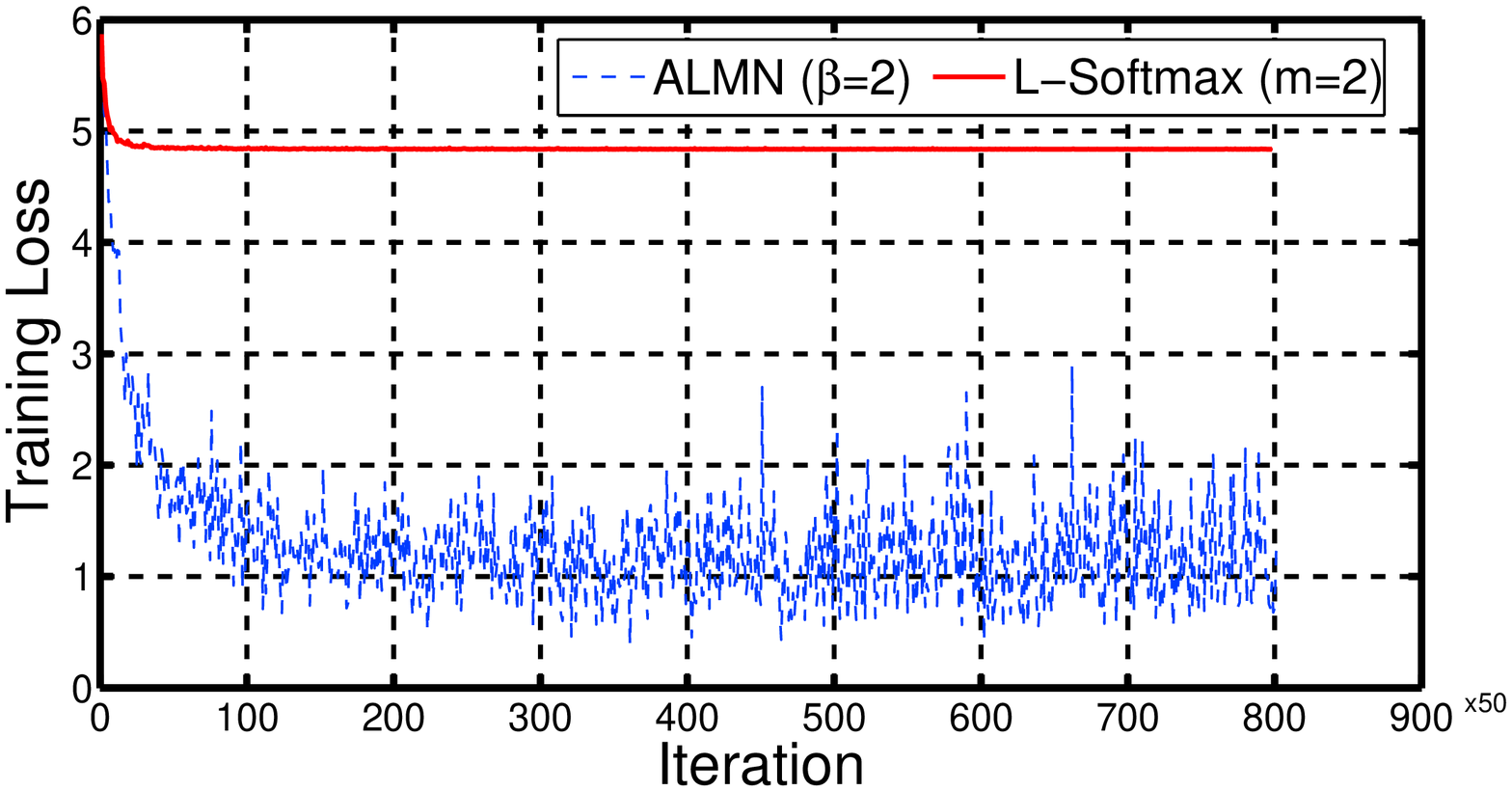}\\
  \vspace{-1em}
  \caption{Training loss on CUB-200-2011 dataset.}\label{fig6}
  \end{minipage}
  \begin{minipage}{0.45\linewidth}
  \centering
  \includegraphics[width=1\linewidth]{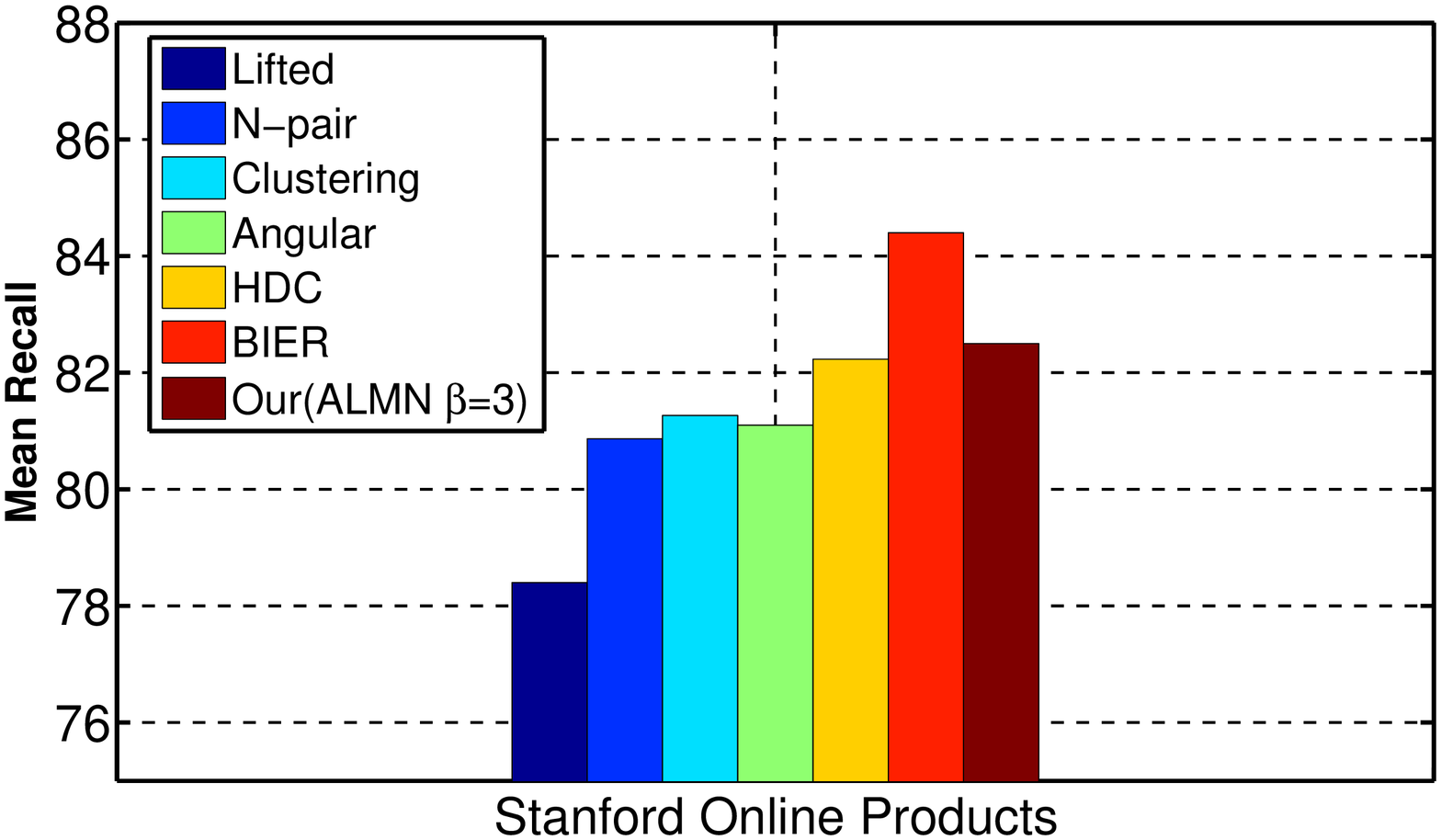}\\
    \vspace{-1em}
    \caption{Mean Recall on Stanford Online Products.}\label{fig7}
  \end{minipage}
    \begin{minipage}{0.45\linewidth}
  \centering
  \includegraphics[width=1\linewidth]{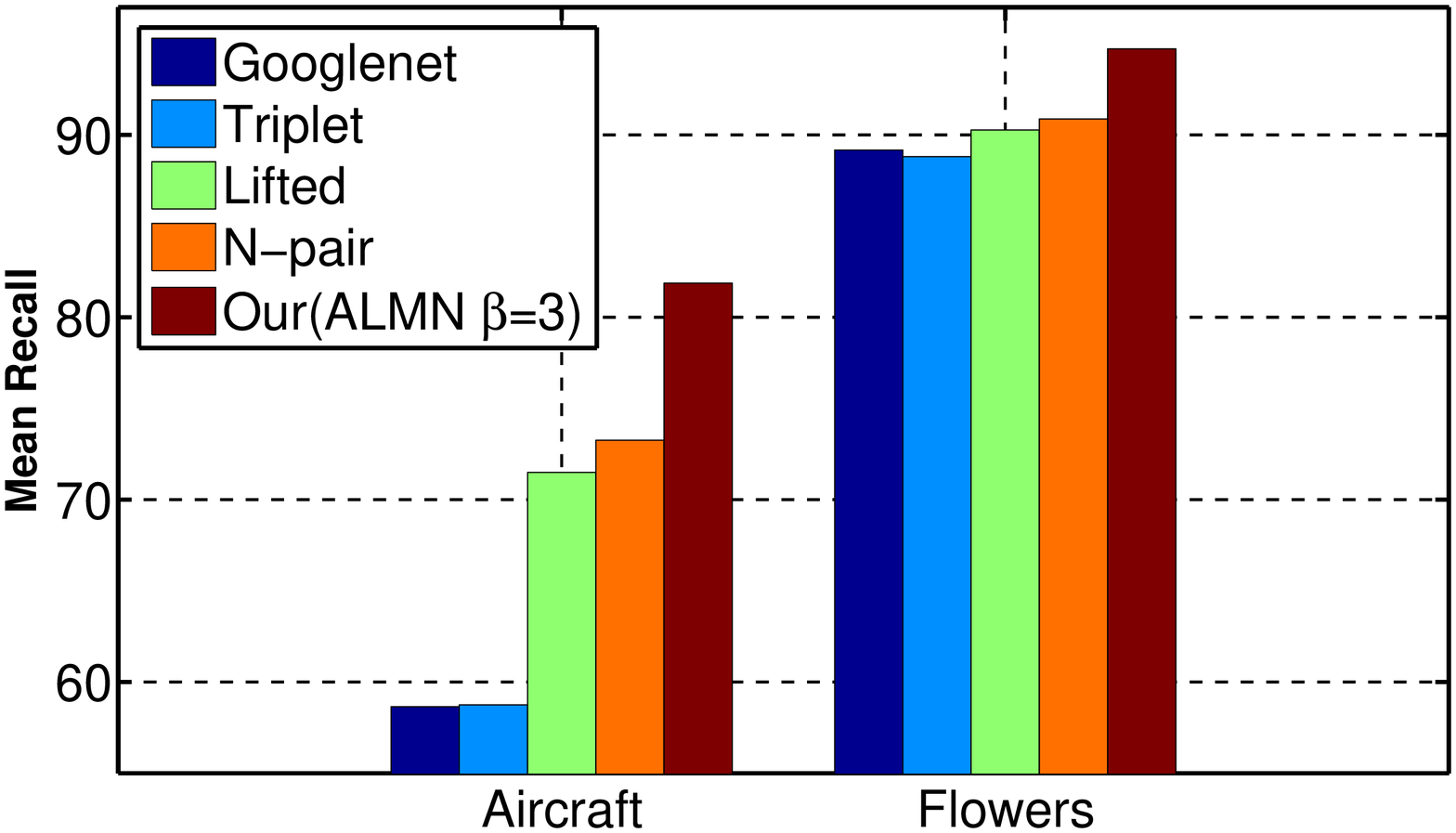}\\
    \vspace{-1em}
    \caption{Mean Recall on Aircraft and Flowers.}\label{fig8}
  \end{minipage}
    \begin{minipage}{0.45\linewidth}
  \centering
  \includegraphics[width=1\linewidth]{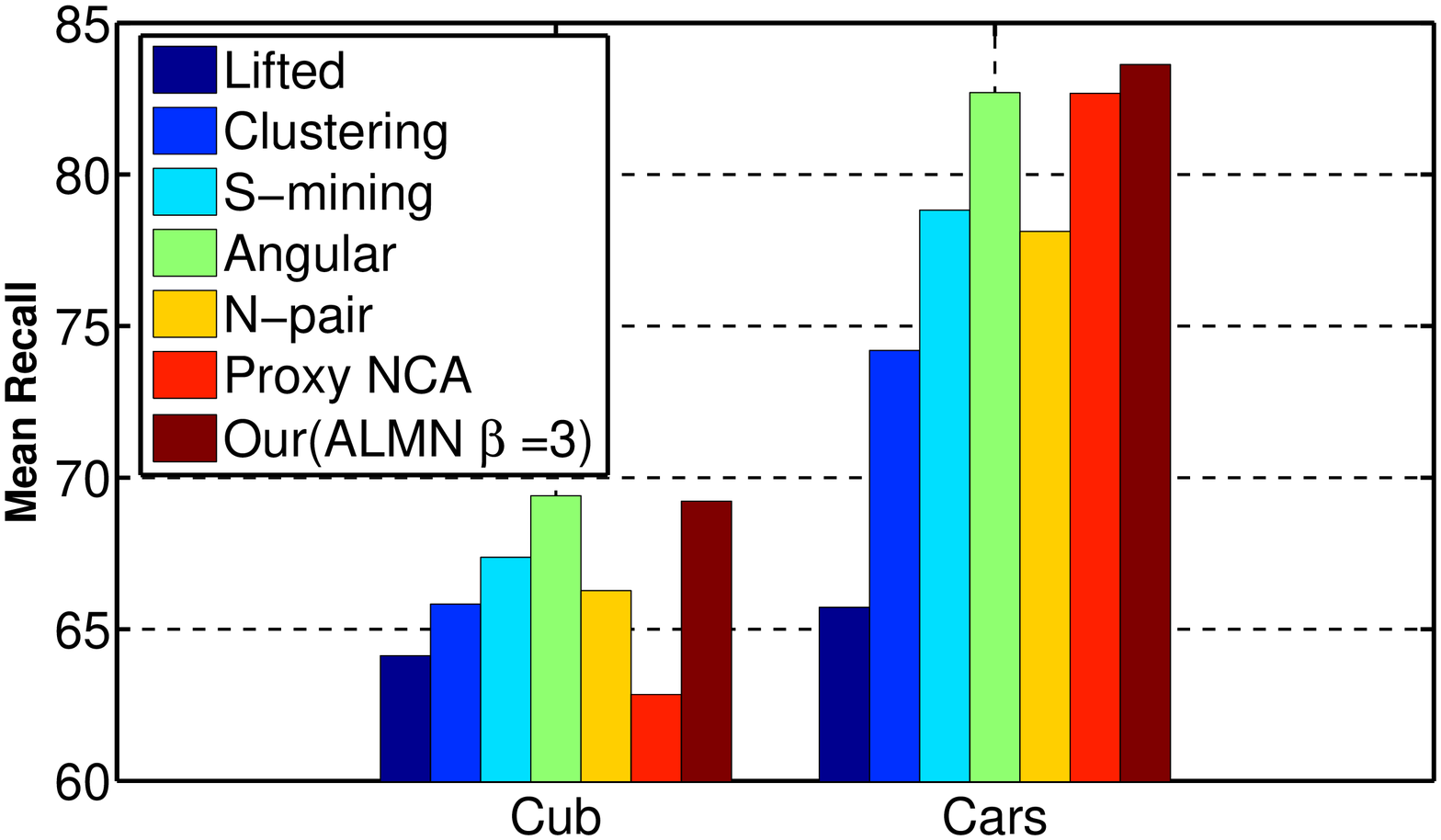}\\
    \vspace{-1em}
    \caption{Mean Recall on Cub and Cars.}\label{fig9}
  \end{minipage}
  \vspace{-1em}
\end{figure}
\vspace{-0.7em}
\section{Experiments and Results}
\vspace{-0.7em}
To demonstrate the effectiveness of our proposed ALMN under multimodal scenarios, we evaluate it on image clustering and retrieval tasks over several benchmark datasets, which present varieties of variations such as in pose and appearance. Notably, except class label we do not use any other annotation information such as bounding box or part annotation.
\vspace{-1em}
\subsection{Implementation Details}\label{sec5_1}
For network configuration, we use the ImageNet pretrained GoogLeNet \cite{Szegedy2014Going} for initialization and finetune it on our target datasets. The last fully connected layer is initialized with random weights and we fix the embedding feature size at $512$ throughout all of our experiments(since the performance doesn't change much when varying embedding sizes according to \cite{oh2016deep}). We set dropout ratio to $0.2$. For fair comparison, we follow the same data preprocess method as adopted in \cite{oh2016deep}, i.e. all the training and testing images are processed into $256\times256$ and then mean subtraction is performed. For data augmentation, all training images are randomly cropped to $227\times227$ and randomly mirrored. All of our experiments are implemented by Caffe library \cite{Jia2014Caffe} with our own modifications.

As we mentioned in the above section, we do not perform hard-class mining. Instead, we construct a random batch in $m\times n$ manner, where $m$ and $n$ denote the number of classes and the number of samples in each class, respectively. Note that, the classes and samples are all randomly selected. And we will investigate the affects of different combinations of $m$ and $n$ in the following subsection.

\textbf{Training}: The initial learning rate $\alpha$ is $0.00001$ and multiplied by $0.8$ at $20k$ iteration. However, the total iterations are $50k$ and $80k$ for (CUB, Flowers, Aircraft) and CARS196, respectively. We use a weight decay of $0.0002$ and momentum of $0.9$. Moreover, the regularization constant $\lambda$ for $L_{2}$ norm is $0.0005$ and we use 10 times learning rate for the feature layer.

\textbf{Evaluation}: The same as many other research works \cite{oh2016deep,Sohn2016npair,Huang2016Local}, we use the $F_{1}$ and NMI metrics for image clustering task and the Recall@K metric for image retrieval task. We use the simple cosine distance for the evaluation of the embedding feature. We make ALMN ($\beta=0$), which means training without VPG (i.e. $x_{g}=x_{i}$), as our baseline. For comparison, we evaluate many existing methods, and implement some of them with the same network and training configurations as ours, including triplet loss \cite{Schroff2015FaceNet}, lifted structured embedding \cite{oh2016deep} and N-pair loss \cite{Sohn2016npair}.
\vspace{-1em}
\subsection{Component Analysis}
\textbf{Mini-batch combination}:
To acquire a stable location of the anchor point, we employ the class center $c_{y_{i}}$. However, we experimentally found that the combination of mini-batch is important to the update of $c_{y_{i}}$.

Inspired by N-pair loss, we construct a $m\times n$ mini-batch, where $m$ and $n$ denote the number of classes and the number of the samples in each class, respectively. Throughout our experiments the value of $m\times n$ is fixed, and we can imagine that, as $n$ increases, there are more and more positive samples to contribute to the update of $c_{y_{i}}$ at the same time, resulting in a more stable and more real class center. However, when $n$ is large enough and $m=1$, i.e. in $0$ negative sample limit, there is no contribution from negative samples and thus the inter-class separability will not be guaranteed.
\begin{table*}[ht]
\vspace{-1em}
  \centering
    \begin{tabular}{c|cccc|cccc}
    \hline
     & \multicolumn{4}{c|}{CUB-200-2011} & \multicolumn{4}{c}{Cars196} \\
     \hline
     \hline
   $m\times n$  & 65 x 2  & 26 x 5  & 16 x 8  & 8 x 16  & 65 x 2  & 26 x 5  & 16 x 8  & 8 x 16 \\
    \hline
    Recall@K=1   & 51.1  & \textbf{52.4}  & 52.1    & 51.1  & 64.2  & \textbf{71.6}  & 69.7  & 68.8 \\
    Recall@K=2   &  63.7     & \textbf{64.8}  & 64.4   &  64.0  & 75.2    & \textbf{81.3}  & 80.5   & 79.1 \\
    Recall@K=4   &  74.5     & \textbf{75.4}  &  75.6   &  74.6  & 83.7  & \textbf{88.2}  & 88.3   & 86.4 \\
    Recall@K=8   &  83.6   & \textbf{84.3}  &   84.3  &   84.0 & 90.0  & \textbf{93.4}  &  92.8  & 92.3 \\
    \hline
    F1    &    27.2   & \textbf{28.5}  &   27.5  &28  & 24.6  & \textbf{29.4}  &   26.9    & 25.3 \\
    NMI   &    59.7   & \textbf{60.7}    &   59.6  &  60.3    & 57.9  & \textbf{62.0}  &  60.9  & 58.6 \\
    \hline
    \end{tabular}%
    \caption{F1, NMI, and recall@K scores (\%) on CUB-200-2011 \cite{Wah2011The} and CARS196 \cite{Krause20133D} datasets with different combinations of $m\times n$.}
    \vspace{-3em}
  \label{tab1}%
\end{table*}%

We evaluate the performances of the ALMN loss with different combinations of $m\times n$ on CUB-200-2011 \cite{Wah2011The} and CARS196 \cite{Krause20133D}. And the experimental results are listed in Table \ref{tab1}. From the results, one can observe that the performances are different when using various combinations of $m\times n$, where the total batch sizes are almost the same. As we analyzed above, a relatively appropriate combination of $m\times n$ is required, which is important for stable training and discriminative embedding learning. And we use the combination of $26\times5$ in the following subsections. Notably, although we need to construct the mini-batch according to some protocol, the selection is totally random and there is no computational cost since there is no demand to evaluate the embedding vectors in deep learning framework, which is different from hard-class mining in N-pair loss.

\textbf{Enlarging angular margin}: We can further enhance the angular margin constraint by increasing parameter $\beta$ such that a larger decision margin among classes can be produced and the more discriminative embedding can be achieved. From Table \ref{tab2}, when $\beta=0$ in zero constraint limit, our baseline algorithm obtain relatively lower results. Then one can observe that, when $\beta=1$ our ALMN can significantly improve nearly $2\%$ and $4\%$ R@1 accuracies over CUB and CARS datasets respectively, verifying the effectiveness of the adaptive large margin constraint induced by VPG. Afterwards, it can further improve the performances over all datasets by increasing $\beta$ e.g. $\beta=2,3$, demonstrating our initial thought that larger decision margin among classes will encourage the learning of discriminative embedding. Likewise, the improvements can also be found in other datasets as in Table.\ref{tab3} \ref{tab4}.
\vspace{-1em}
\subsection{Comparison with State-of-the-art}
\vspace{-0.6em}
\begin{table*}[t]
  \centering
  \resizebox{\textwidth}{!}{%

   \begin{tabular}{c|c|c|c|c||c|c|c|c|c|c||c|c}
    \hline
    \multirow{2}[4]{*}{} & \multicolumn{6}{c|}{CUB-200-2011}             & \multicolumn{6}{c}{Cars196} \\
\cline{2-13}          & R@1   & R@2   & R@4   & R@8   & F1    & NMI   & R@1   & R@2   & R@4   & R@8   & F1    & NMI \\
    \hline
\footnotesize{Google\cite{Szegedy2014Going}$^{+}$} & 40.8  & 53.8  & 67.0    & 78.2  & 18.0    & 51.5  & 35.5  & 47.5  & 58.9  & 71.5  & 8.6   & 37.1 \\
\footnotesize{Triplet\cite{Schroff2015FaceNet}$^{+}$} & 36.1  & 48.6  & 59.3  & 70.0    & 15.1  & 49.8  & 39.1  & 50.4  & 63.3  & 74.5  & 16.8  & 51.4 \\
\footnotesize{Lifted\cite{oh2016deep}$^{+}$} & 47.2  & 58.9  & 70.2  & 80.2  & 21.2  & 55.6  & 49.0    & 60.3  & 72.1  & 81.5  & 21.8  & 55.0 \\
\footnotesize{Clustering\cite{songCVPR17}}&48.2 & 61.4 & 71.8 & 81.9 & - & 59.2 & 58.1 & 70.6 & 80.3 & 87.8 & - & 59.4\\
\footnotesize{S-mining\cite{kumar2017smart}}& 49.8& 62.3 & 74.1 & 83.3 & - & 59.9 & 64.7 & 76.2 & 84.2 & 90.2 & - & 59.5\\
\footnotesize{Angular\cite{wang2017deep}}& 53.6 & 65.0 &75.3 &83.7 & 30.2 & 61.0 & 71.3 &80.7 &87.0 &91.8 & 31.8 & 62.4\\
\footnotesize{ N-pair\cite{Sohn2016npair}$^{+}$} & 49.1    & 61.2  & 72.7  & 82.1  & 25.9  & 58.5  & 63.6  & 74.7  & 84.1  & 90.1  & 23.9  & 57.4 \\
\footnotesize{Proxy NCA\cite{Movshovitz-Attias_2017_ICCV}}&49.2&61.9&67.9&72.4&-&59.5&73.2&82.4&86.4&88.7&-&64.9\\
    \hline\hline
    ALMN ($\beta=0$)  & 50.4  & 62.7  & 73.5    & 82.9  & 27.6    & 59.4  & 66.2  & 76.7  & 85.1  & 91.4  & 23.6    & 56.7 \\
    ALMN ($\beta=1$)  & 52.0  & 64.5  & 74.8  & 83.7  & 28.2  & 60.2    & 70.4  & 80.4  & 87.3  & 92.5  & 26.3  & 59.3 \\
    ALMN ($\beta=2$)  & 52.2  & 64.7  & 75.3  & 84.2  & 28.2  & 60.7    & 71.3  & 81.2  & 88.1  & 93.1  & 28.3  & 61.5 \\
    ALMN ($\beta=3$) & \textbf{\emph{52.4}}  & \textbf{\emph{64.8}}  & \textbf{\emph{75.4}}  & \textbf{\emph{84.3}}  & \textbf{\emph{28.5}}  & \textbf{\emph{60.7}}    & \textbf{\emph{71.6}}  & \textbf{\emph{81.3}}  & \textbf{\emph{88.2}}  & \textbf{\emph{93.4}}  & \textbf{\emph{29.4}}  & \textbf{\emph{62.0}} \\
    \hline
    \end{tabular}%
    }
      \caption{Image clustering and retrieval results(\%) on CUB \cite{Wah2011The} and Cars196 \cite{Krause20133D}. $^{+}$ refers to our re-implement. Our best results are bold-faced.}
      \label{tab2}
       \vspace{-2em}
\end{table*}%
\textbf{CUB-200-2011} dataset \cite{Wah2011The} includes 11,788 bird images coming from 200 classes. We use the first 100 classes for training (5,864 images) and the rest 100 classes for testing (5,924 images). We list our experimental results together with those of other state-of-the-art methods in Table \ref{tab2}. From the results, one can observe that our baseline ALMN ($\beta=0$) outperforms N-pair loss even without large margin constraint, demonstrating that the reasonability of location of the anchor point can not only make training stable but improve the performance. And by introducing an adaptive large angular margin constraint among classes, our ALMN ($\beta=3$) can significantly improve the performances and also outperforms most existing methods, even achieving the comparable results compared to the state-of-the-art methods, and thus verifying the effectiveness of our adaptive large margin constraint.

\textbf{CARS196} dataset \cite{Krause20133D} includes 16,185 car images coming from 196 classes. We split the first 98 classes for training (8,054 images) and the rest 98 classes for testing (8,131 images). We list our experimental results together with that of other state-of-the-art methods in Table \ref{tab2}. From the results, it can be observed that ALMN ($\beta=0$) shows the better performances than N-pair loss, demonstrating the superiority of our choice of the anchor point $c_{y_{i}}$. Then, ALMN ($\beta=3$) can significantly improve nearly $5\%$ R@1 result over the baseline ALMN and also outperforms most of the other existing methods, obtains comparable results compared to state-of-the-art, verifying the effectiveness of our method.
\begin{table*}[t]
  \centering
  \resizebox{\textwidth}{!}{%
    \begin{tabular}{c|c|c|c|c||c|c|c|c|c|c||c|c}
    \hline
    \multirow{2}[4]{*}{} & \multicolumn{6}{c|}{Flowers102}             & \multicolumn{6}{c}{Aircraft}\\
\cline{2-13}          & R@1   & R@2   & R@4   & R@8   & F1    & NMI   & R@1   & R@2   & R@4   & R@8   & F1    & NMI \\
    \hline
Googlenet$^{+}$\cite{Szegedy2014Going} &80.5&87.6&92.9&95.7&41.0&63.8&42.0&52.8&64.2&75.6&10.3&30.0\\
          Triplet$^{+}$\cite{Schroff2015FaceNet} & 80.3 & 87.2 & 92.0 &95.7  &41.3  &64.0 & 41.8 & 53.5 & 64.4 & 75.3 & 10.7 & 31.3\\
    Lifted$^{+}$\cite{oh2016deep} & 82.6  & 89.4  & 93.1  & 96.0  & 43.3 & 65.9 & 53.8  & 67.5  & 77.7  & 85.5  & 23.8  & 51.9\\
    n-pair$^{+}$\cite{Sohn2016npair} & 83.3  & 89.9  & 93.9  & 96.4 & 43.2 & 66.1 & 56.1  & 69.0    & 80.2  & 87.7  & 24.7  & 52.4\\
    \hline
    \hline
    ALMN($\beta=0$)   & 85.3 &91.4 & 94.7  & 97.2 & 53.1 & 71.5 & 63.5  & 74.2  & 83.3  & 90.0    & 25.7  & 53.3\\
    ALMN($\beta=1$)   & 88.8  & 93.1  & 95.9  & 98.1  & 56.3  & 75.7 & 67.0  & 78.1  & 86.6  & 91.3  & 29.5  & 56.2\\
    ALMN($\beta=2$)   & 89.5  & 93.8  & 96.3  & 98.0  & 56.6  & 75.9 & 67.9  & 79.3  & 87.0  & 91.8  & 30.4  & 57.2\\
    ALMN($\beta=3$)   & \emph{\textbf{90.1}}  & \emph{\textbf{94.0}}  & \emph{\textbf{96.6}}  & \emph{\textbf{98.2}}  & \emph{\textbf{57.0}}  & \emph{\textbf{76.2}} & \emph{\textbf{68.4}}  & \emph{\textbf{79.9}}  & \emph{\textbf{87.2}}  & \emph{\textbf{92.0}}  & \emph{\textbf{30.7}}  & \emph{\textbf{57.9}}\\
    \hline
    \end{tabular}%
    }
      \caption{Image clustering and retrieval results on Flowers102 \cite{Nilsback08} and Aircraft dataset\cite{Maji2013Fine}. $^{+}$ refers to our re-implement. And our best results are bold-faced.}
      \label{tab3}
       \vspace{-3em}
\end{table*}%

\textbf{Flowers102} The Flowers102 dataset \cite{Nilsback08} includes 8189 flower images from 102 classes. Each class consists of between 40 and 258 images. We split the first 51 classes for training (3493 images) and the rest 51 classes for testing (4696 images). We implement triplet loss \cite{Schroff2015FaceNet}, lifted structured embedding \cite{oh2016deep} and n-pair loss \cite{Sohn2016npair} with the same network and training configurations as ours and test them with the single crop. From the results shown in Table. \ref{tab3}, our baseline ALMN($\beta=0$) outperforms other works by adopting a stable anchor point. And ALMN($\beta=3$) can further improve the performances for image clustering and retrieval tasks by learning a discriminative embedding with adaptive large margin constraint, demonstrating the superiority of our method.

\textbf{Aircraft} The Aircraft dataset \cite{Maji2013Fine} has 100 classes of aircrafts with 10,000 images. We split the first 50 classes for training (5,000 images) and the other 50 classes for testing (5,000 images). We also implement triplet loss \cite{Schroff2015FaceNet}, lifted structured embedding \cite{oh2016deep} and n-pair loss \cite{Sohn2016npair} with the same network and training configurations as ours and then test them with the single crop. From the results shown in Table. \ref{tab3}, our baseline ALMN($\beta=0$) outperforms other works by adopting a stable anchor point. And ALMN($\beta=3$) can further improve nearly $5\%$ and $6\%$ for image retrieval and clustering (F1) tasks respectively by learning a discriminative embedding with adaptive large margin constraint.

\begin{table}[H]
\vspace{-2em}
  \centering
  \resizebox{\textwidth}{!}{
    \begin{tabular}{c||c|c|c|c|c|c||c|c}
    \hline
     & Lifted\cite{oh2016deep} & n-pair\cite{Sohn2016npair} & Clustering\cite{songCVPR17} & Angular\cite{wang2017deep} &HDC\cite{Yuan_2017_ICCV} &BIER\cite{Opitz_2017_ICCV}& ALMN($\beta=0$)& ALMN($\beta=3$) \\
    \hline
    ensemble &$\times$&$\times$&$\times$&$\times$&$\surd$&$\surd$&$\times$&$\times$\\
    \hline
    R@1   & 62.5  & 66.4  & 67    & 67.9& 69.5 &72.7 & 69.3&\textbf{\emph{69.9}} \\
    \hline
    R@10   & 80.8  & 83.2  & 83.6  & 83.2& 84.4 &86.5 & 84.5&\textbf{\emph{84.8}}\\
    \hline
    R@100 & 91.9 & 93 & 93.2 & 92.2 & 92.8& 94&92.7&\textbf{\emph{92.8}}\\
    \hline
    \end{tabular}%
    }
      \caption{Results on Stanford Online dataset\cite{oh2016deep}. Our best results are bold-faced.}\label{tab4}
      \vspace{-3.5em}
\end{table}%
\textbf{Stanford Online Products} dataset\cite{oh2016deep} has $120k$ images of $22k$ online classes and each class has $5.3$ images on average. Following the zero-shot protocol, we also split the first $11318$ classes for training and the remaining $11316$ classes for testing. We show our final results in Table.\ref{tab4}. One can observe that our method ($\beta=3$) achieves appealing results compared to other single-feature methods and the ensemble-feature method(e.g. HDC\cite{Yuan_2017_ICCV} and BIER\cite{Opitz_2017_ICCV}, ensemble is well known better than single feature).

The Mean Recall comparisons over these datasets are in Figure\ref{fig9} \ref{fig8} \ref{fig7}.
\vspace{-1em}
\subsection{Cases Study}
\vspace{-0.6em}
To show the results of discriminative embedding learning under multimodal scenario, we provide some cases over CUB-200-2011 \cite{Wah2011The} and Cars196 \cite{Krause20133D} datasets in Figure \ref{fig5}. From the comparison between top-1 positive and top-1 negative retrieval, it can be observed that the image is correctly retrieved by our algorithm. Then by introducing the adaptive large margin constraint among classes, our ALMN ($\beta=3$) can significantly increase the similarity score between the query and top-1 positive retrieval images, implying that the intra-class compactness is strengthened. And from the results of top-1 negative retrieval results, one can observe that our ALMN ($\beta=3$) can significantly reduce the similarity score between the query and top-1 negative sample, demonstrating that our method produces a more separable inter-class distance.
\begin{figure*}[t]
\vspace{-1em}
  \centering
  \includegraphics[width=1\linewidth]{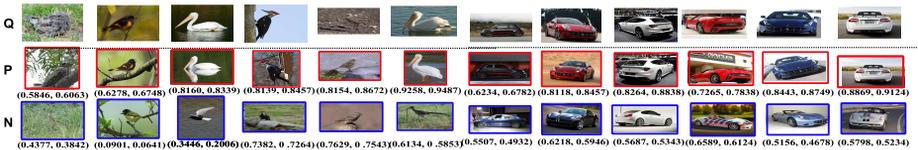}\\
  \vspace{-0.5em}
  \caption{Retrieval task cases on CUB \cite{Wah2011The} and Cars196 \cite{Krause20133D} datasets. The query images are shown on top of the figure. Top-1 positive and top-1 negative images retrieved by our ALMN are marked with red and blue boxes, respectively. And the similarity scores using ALMN ($\beta=0$) and ALMN ($\beta=3$) are orderly shown underneath the images.}\label{fig5}
  \vspace{-2em}
\end{figure*}
\vspace{-1em}
\section{Conclusion}
\vspace{-0.6em}
In this paper, we propose ALMN to address the problem of discriminating feature learning in multimodal feature space. It encourages intra-class compactness and inter-class separability by enlarging the angular decision margin among classes. And the prudent margin constraint is local-adaptive. Moreover, the novel concept of VPG gives chances of discriminative embedding learning without hard-example mining, and the virtual point generating method is an open question which may benefit the community. Extensive quantitative and qualitative results demonstrate the effectiveness of our proposed method.
\bibliographystyle{splncs}
\bibliography{egbib}
\end{document}